\newtheorem{theorem}{{Theorem}}
\newtheorem{lemma}[theorem]{{Lemma}}
\newcommand{\cF}{{\cal F}}
\newcommand{\cN}{{\cal N}}
\DeclareMathAlphabet{\mathbfsl}{OT1}{ppl}{b}{it} 
\newcommand{\bA}{\mathbfsl{A}} 
\newcommand{\bB}{\mathbfsl{B}}
\newcommand{\bI}{\mathbfsl{I}}
\newcommand{\bN}{\mathbfsl{N}}
\newcommand{\bX}{\mathbfsl{X}}
\newcommand{\bY}{\mathbfsl{Y}}
\newcommand{\ba}{\mathbfsl{a}} 
\newcommand{\bu}{\mathbfsl{u}} 
\newcommand{\bw}{\mathbfsl{w}} 
\newcommand{\bx}{\mathbfsl{x}}
\newcommand{\by}{\mathbfsl{y}} 
\newcommand{\bz}{\mathbfsl{z}}
\newcommand{\bl}{\mathbfsl{l}}
\newcommand{\ceil}[1]{\left\lceil #1 \right\rceil}
\newcommand*{\rom}[1]{\expandafter\romannumeral #1}
\newcommand{\AlignFootnote}[1]{%
  \ifmeasuring@
  \else
    \iffirstchoice@
      \footnote{#1}%
    \fi
  \fi}
\newcommand{\be}{\begin{equation}}
\newcommand{\ee}{\end{equation}} 
\newcommand{\eq}[1]{(\ref{#1})}
\renewcommand{\leq}{\leqslant}
\renewcommand{\geq}{\geqslant}
\renewcommand{\Bbb}{\mathbb}
\newcommand{\C}{{\Bbb C}} 
\newcommand{\R}{{\Bbb R}} 
\newcommand{\D}{{\Bbb D}}
\newcommand{\F}{{\Bbb F}}
\newcommand{\Tref}[1]{Theo\-rem\,\ref{#1}}
\newcommand{\Lref}[1]{Lem\-ma\,\ref{#1}}
\newcommand{\Cref}[1]{Co\-ro\-lla\-ry\,\ref{#1}}
\newcommand{\deff}{\mbox{$\stackrel{\rm def}{=}$}}
\newcommand{\norm}[1]{\left\lVert#1\right\rVert}
\begin{document}

\title{\Huge$\,$\\[-2.75ex]
{Privacy-Preserving Distributed Learning\\ in the Analog Domain}\\[0.50ex]}

\author{\large%
Mahdi Soleymani, Hessam Mahdavifar, and A. Salman Avestimehr
\vspace{-.25in}
\thanks{This work was supported by the National Science Foundation
under grants CCF--1763348, CCF--1909771, CCF--1941633.}
\thanks{M.\ Soleymani and H.\ Mahdavifar are with the Department of Electrical Engineering and Computer Science, University of Michigan, Ann Arbor, MI 48104 (email: mahdy@umich.edu and hessam@umich.edu).}
\thanks{A.\ Salman Avestimehr is with the Department of Electrical
Engineering, University of Southern California, Los Angeles, CA 90089 USA
(e-mail: avestimehr@ee.usc.edu).}
}


\maketitle

\begin{abstract}
We consider the critical problem of distributed learning over data while keeping it private from the computational servers. 
The state-of-the-art approaches to this problem rely on quantizing the data into a finite field, so that the cryptographic approaches for secure multiparty computing can then be employed. These approaches, however, can result in substantial accuracy losses due to fixed-point representation of the data and computation overflows.
To address these critical issues, we propose a novel algorithm to solve the problem when data is in the analog domain, e.g., the field of real/complex numbers. We characterize the privacy of the data from both information-theoretic and cryptographic perspectives, while establishing a connection between the two notions in the analog domain. More specifically, the well-known connection between the distinguishing security (DS) and the mutual information security (MIS) metrics is extended from the discrete domain to the continues domain. This is then utilized to bound the amount of information about the data leaked to the servers in our protocol, in terms of the DS metric, using well-known results on the capacity of single-input multiple-output (SIMO) channel with correlated noise. It is shown how the proposed framework can be adopted to do computation tasks when data is represented using floating-point numbers. We then show that this leads to a fundamental trade-off between the privacy level of data and accuracy of the result.
As an application, we also show how to train a machine learning model while keeping the data as well as the trained model private. Then numerical results are shown for experiments on the MNIST dataset. Furthermore, experimental advantages are shown comparing to fixed-point implementations over finite fields.
\end{abstract}

\begin{keywords}
Analog secret sharing, privacy-preserving computing, distributed learning.
\end{keywords}
\section{Introduction}

It is estimated that $2.5 \times 10^{18}$ bytes of data are generated every day with a pace that is only accelerating as $90$ percent of the data in the world has been generated in the past two years. Datasets with massive size need to be processed at an unprecedented scale, which makes it imperative to provide scalable solutions for large computational jobs associated with learning problems to be performed in a distributed fashion \cite{abadi2016tensorflow}. In such distributed systems, data is dispersed among many servers that operate in parallel with the aim of collectively completing a certain computational job, e.g., computing a certain function over the dataset. Then the results generated by \textit{sufficiently} many local servers are collected in order to recover the desired outcome, e.g., the output of the given function over the dataset. 

One of the major concerns in such distributed learning systems is to preserve the privacy of the dataset while dispersing it among the servers. More specifically, the dataset may contain highly sensitive information, e.g., biometric data of patients in a hospital \cite{raghupathi2014big} or customers' data of a company \cite{mcafee2012big}, necessitating that \textit{almost} no information about the dataset is revealed to the computational servers. Such a privacy constraint is often generalized to ensure that any subset of colluding servers, up to a certain size, can not gain \textit{almost} any information about the dataset. 

The privacy of data can be measured in terms of various metrics, including information-theoretic security \cite{shannon1949communication} as well as well-known notions of semantic security and distinguishing security in the cryptography literature emanating from \cite{goldwasser1984probabilistic}. Fundamental connections between these notions are established in \cite{bellare2012cryptographic}. The seminal Shamir's secret sharing scheme and its various versions are often used to provide information-theoretic security for data, referred to as a secret, while distributing it among a set of servers/users \cite{shamir1979share}. Also, Shamir's scheme serves as the backbone of most of the existing schemes on privacy-preserving distributed computing such as the celebrated BGW scheme \cite{ben1988completeness}. The idea can be illustrated via an example as follows. Consider a given dataset $X$ and two computational servers, referred to as servers $1$ and $2$. Suppose that the function $f(X)=aX$, where $a$ is a scaler, needs to be computed over the dataset $X$. The data symbols in $X$ as well as $a$ are considered as elements of a finite field $\F_q$. Then a random $N$ is generated, with the same size as the dataset $X$ and entries generated independently and uniformly at random from $\F_q$. Then $N$ and $X+N$, also referred to as secret shares, are given to the servers $1$ and $2$, respectively. Since $N$ and $X+N$ are both uniformly distributed, the servers do not learn anything about $X$ individually. The servers return $aN$ and $a(X+N)$. Then $aX$ is recovered by subtracting the former from the latter.


In Shamir's scheme, the secret/data symbols are always assumed to be elements of a finite field. Consequently, the state-of-the-art schemes treat the data symbols in the given dataset as finite field elements in order to employ Shamir's secret sharing, see, e.g., \cite{ben1988completeness}. However, quantizing the data into a finite field can result in substantial accuracy losses mainly due to computation overflows. In practice, the dataset $X$ consists of real/complex values often represented as floating-point numbers. Then $X$ can not be perfectly secured in an information-theoretic sense, i.e., the mutual information between $X$ and $X+N$, denoted by $I(X;X+N)$, being exactly zero. These are the main challenges that need to be properly addressed when designing privacy-preserving distributed learning algorithms in the infinite fields of $\R / \C$, also referred to as the \textit{analog domain}.


\subsection{Our contributions}
\label{sec:one-A}

In this paper, we provide a framework to construct the counterpart of Shamir's secret sharing scheme in the analog domain. This framework is then used to construct privacy-preserving distributed computation and learning protocols over real/complex datasets. In other words, all the operations including encoding the data symbols to be distributed among the computational servers and recovery of the final outcome from the collected results returned by the servers are over the infinite fields of $\R / \C$. It is assumed that the servers are \textit{honest-but-curious} meaning that they will not deviate from the protocol but may attempt to infer the data from what they observe throughout the protocol.

In the proposed protocol, the information-theoretic measure of security is no longer \textit{perfect}, comparing to Shamir's secret sharing scheme over finite fields, as discussed earlier. In order to show the privacy guarantees of the protocol, bounds are provided on how much information about data is revealed to a server/subsets of servers in terms of various notions of security. We also argue that, in a practical setting, this comes at the expense of accuracy of the final outcome of the protocol when all data symbols are represented by floating-point numbers and all operations are also assumed to follow \textit{standard} floating-point operations. More precisely, we provide a fundamental trade-off between the security level of the protocol and the accuracy of the outcome in a practical setting assuming floating-point operations. The proposed protocol is also used in a distributed learning experiment using four servers to train a logistic regression model over MNIST dataset \cite{lecun2010mnist}. In this experiment, the amount of information about the dataset and the trained model revealed to each of the servers, in terms of the distinguishing security metric, is less than $10^{-15}$ and $2\times 10^{-14}$, respectively.  It is observed that the accuracy of our protocol closely follows that of the conventional centralized approach, thereby offering a privacy-preserving distributed solution at a negligible cost in terms of the accuracy of the result. Furthermore, it is shown that while approaches based on fixed-point implementations suffer from a sharp transition to the performance of randomly guessing by increasing the size of training dataset, our protocol offers a robust solution that is scalable with the size of the training dataset.



\subsection{Related work}

Studying privacy-preserving distributed machine learning algorithms has recently received significant attention in the literature \cite{wagh2018securenn,yu2018lagrange,dahl2018private,barak2019secure,so2019codedprivateml,kumar2019cryptflow}. There is also an extensive amount of work on secure matrix-matrix multiplication which is a core building block for many machine learning algorithms, see, e.g., \cite{yu2020entangled,aliasgari2020private,d2020gasp,bitar2019private,nodehi2019secure}. As mentioned earlier, the information-theoretic privacy guarantees of the data in these prior works is based upon Shamir's secret sharing scheme and its variations. Since Shamir's scheme needs to be run over a finite field while data symbols are real-valued, a common method is to assume a certain quantization of the data symbols followed by mapping them into elements of a finite field of a large prime size. However, if an overflow occurs, i.e., a computed symbol during the computation process by one of the servers becomes larger than the field size, then a successful recovery of the outcome of the computation can not be guaranteed. In other words, the computation procedure at each of the servers can be regarded as a fixed point computation, which is constrained by conditions guaranteeing no overflow occurs. 

There is also another line of work on adopting coded distributed computing protocols for computation over real-valued data \cite{fahim2019numerically,ramamoorthy2019numerically,das2019distributed,charalambides2020numerically}. But these works are mostly focused on the numerical stability of the protocols in the presence of slow or unresponsive servers, also referred to as stragglers, and do not study privacy guarantees for the data. Our focus in this paper is on providing privacy-preserving protocols and straggler servers are not considered. Also, codes in the analog domain have been recently studied in the context of block codes \cite{roth2020analog} as well as subspace codes \cite{soleymani2019analog} for analog error correction. However, secret sharing and privacy-preserving computation in the analog domain are not discussed in these works.

Another related major line of work concerns with floating-point implementation of secure multi-party computing (MPC) protocols \cite{setty2012taking,aliasgari2013secure,catrinatowards}. Such protocols can be described in high level as follows. In a \textit{standard} floating-point implementation, each number/data symbol is represented by two main components: one represents the most, let's say $v$, significant bits of the data symbol and the other one represents the power of the exponent of data symbol. Then these two components of the data symbols are secured separately using Shamir's secret sharing over finite fields. This requires a certain implementation of floating-point operations and does not allow using off-the-shelf readily available floating-point operations that are often optimized to perform computational tasks on badges of data in parallel. As a result, the inefficiency of such protocols poses a major difficulty in their implementation. Furthermore, a major difference between this line of work and our approach is that the parties are allowed to communicate in secure multiparty computing. This is mainly because in this setting each party aims at computing a certain function of data symbols shared between the parties without revealing any information about his/her share of the data. The communication overhead between the parties/servers is another major factor contributing to the inefficiency of these protocols in practical systems. On the other hand, in our approach, different servers are assumed to run in parallel and no communication is required between them. Once finished, the servers return their locally computed results from which the true outcome of the computational task can be computed.

The rest of this paper is organized as follows.
The problem is formulated in Section\,\ref{sec::system_model} followed by the description of the proposed protocol. The accuracy of the the protocol is analyzed in Section\,\ref{sec::accuracy}. In Section\,\ref{sec:privacy} we provide an analysis for the privacy level of data in the protocol by considering two well-known notions of security. Various experimental results are provided in Section\,\ref{sec:experiments}. Finally, the paper is concluded in Section\,\ref{sec:conclusion}.


\section{Problem Formulation and the Protocol}\label{sec::system_model}


Consider a setup with $N$ computation servers/parties indexed by $1,2,\dots,N$. Given a data symbol $s$, also referred to as a \emph{secret}, a $D$-degree polynomial function of $s$ denoted by $f(s)$ needs to be computed by utilizing the computational power of the parties, while the secret remains \textit{private} assuming up to $t$ parties can collude. The notion of privacy will be clarified in Section\,\ref{sec:privacy}. The secret $s$ is an instance of a continuous random variable $S$ taking values in $ [-r,r] $. \footnote{Following the convention, random variables are represented by capital letters and their instances are represented by lower case letters.} Other than this constraint on the range of $S$, no assumption is made on the probability distribution of $S$. 

\noindent
\textbf{Remark 1.} Note that the computational task of polynomial evaluation is considered in this paper in order to arrive at explicit analytical guarantees. However, in order to apply this setup to a learning experiment, a polynomial approximation of the underlying computation function, e.g., the sigmoid function, can be considered. This will be further discussed in Section\,\ref{sec:experiments}. 

In the considered protocol, given the secret $s$ the polynomial $p(x)$ is constructed as follows:
$$
p(x)\,\deff\, s+\sum_{j=1}^{t}n_jx^j,
$$
where $n_j$'s are i.i.d., drawn from a zero-mean circular symmetric complex Gaussian distribution with standard deviation $\frac{\sigma_n}{\sqrt{t}}$, denoted by $\cN(0,\frac{\sigma_n^2}{t})$, where $t$ is the maximum number of colluding parties. For evaluating the precision of the protocol in practice, the distribution of $n_i$'s will be truncated, i.e., it is assumed that they are drawn from a truncated Gaussian distribution with a maximum absolute value, denoted by $m$, for $m\in \R^+$. This will be further clarified in Section\,\ref{sec::accuracy}. The shares of the computation parties consist of the evaluation of $p(x)$ over certain complex-valued evaluation points $\omega_1,\hdots,\omega_t$, i.e., 
\begin{align}
\label{system}
y_i=s+\sum_{j=1}^{t}\omega_i^jn_j
\end{align}
is given to server $i$, for $i \in [N]$. In the next section, it is shown how to pick $\omega_i$'s in order to maximize the accuracy. The system of equations in \eq{system} can be written in the matrix form as follows:
\begin{equation}
\label{A-def}
\by_{N \times 1} = \bA_{N \times (t+1)} \bx_{(t+1) \times 1},
\end{equation}
where $\bx = (s,n_1,\hdots,n_t)^{\text{T}}$, $\by=(y_1,\hdots,y_N)^{\text{T}}$, and 
\[
\bA \ \deff\
\begin{bmatrix}
    1 & \omega_1 & \dots &\omega_1^{t} \\
    1 & \omega_2 & \dots &\omega_2^{t} \\
    \vdots & \vdots & \ddots & \vdots \\
    1 & \omega_{N} & \dots &\omega_{N}^{t} 
\end{bmatrix}.
\]
Then server $i$ computes $f(y_i)$ and returns the result, e.g., to a \textit{master} node. The master node then recovers $f(s)$. Conceptually, this can be done by interpolating the polynomial $f\bigl(p(x)\bigr)$ and evaluating it at $0$, i.e., the constant coefficient of $f\bigl(p(x)\bigr)$ is equal to $f(s)$. More specifically, let $\ba = (f(s),a_1, \hdots, a_{d})^{\text{T}}$, where $d = Dt$, denote the vector of all coefficients of the polynomial $f(p(x))$. Let also $\bz = (f(y_1), \hdots, f(y_N))^{\text{T}}$ and 
\[
\bB \ \deff\
\begin{bmatrix}
    1 & \omega_1 & \dots &\omega_1^{d} \\
    1 & \omega_2 & \dots &\omega_2^{d} \\
    \vdots & \vdots & \ddots & \vdots \\
    1 & \omega_{N} & \dots &\omega_{N}^{d} 
\end{bmatrix}.
\]
Then the system of linear equations 
\begin{equation}
\label{a-sol}
\bz_{N \times 1}=\bB_{N\times (d+1)} \ba_{1\times (d+1)},
\end{equation}
can be solved for $\ba$ in order to recover $f(s)$. Note that $N \geq Dt+1 = d+1$ is the necessary and sufficient condition on the number of parties in order to guarantee a successful interpolation of $f\bigl(p(x)\bigr)$, which is of degree $d$. Equivalently, it is the necessary and sufficient condition for recovery of $\ba$ in \eq{a-sol}. Throughout the rest of this paper, it is assumed that $N=d+1$, implying that all shares $y_i$'s are needed to be returned to the master node for a successful recovery of the computation

Note that the master node does not need to compute the entire $\ba$ in \eq{a-sol} and is only interested in recovering $f(s)$, the first entry of $\ba$. Let $\tilde{\boldsymbol{b}}$ denote the first row of $\bB^{-1}$, which is well-defined due to $\bB$ being a Vandermonde matrix. Then the master node only needs to compute $\tilde{\boldsymbol{b}}\bz$ to recover $f(s)$. Since $\omega_i$'s are fixed, $\tilde{\boldsymbol{b}}$ is computed once, is stored, and then is used every time the protocol is run.\\
\noindent
\textbf{Remark 2.}
Note that the computation complexity of encoding in the master node is linear with the dataset size, where the dataset is treated as a vector of secrets and $f(.)$ needs to be evaluated over the entries of this vector. Moreover, the complexity of decoding is also linear with the dataset size as the decoder only computes a linear combination of the results returned by the servers. In other words, the computation complexity at the master node does not depend on $D$, which can be large. It is worth mentioning that the goal of the protocol is not to reduce the \textit{overall} computation complexity of a computation task across all the servers. The protocol in this paper, as well as prior works in the literature, e.g., \cite{so2019codedprivateml}, provide a framework to utilize external computation units in distributed servers while providing privacy guarantees.

To summarize, the protocol is described step-by-step in Algorithm\,\ref{scheme} next.

\begin{algorithm}[H]
\SetAlgoLined
 \textbf{Input:} Secret $s$.\\
 \textbf{Public parameters:} $\bA_{N \times (t+1)}$, $\tilde{\mathbf{b}}_{1 \times N}$.   \\
\textbf{Output:} Evaluation of $f(s)$ in the master node.

\textbf{Encoding phase (at the master):}

Pick i.i.d. $n_j \sim \cN(0,\frac{\sigma_n^2}{t})$, for $j=1,\dots,t$.

Set $\bx = (s,n_1,\hdots,n_t)^{\text{T}}$.

Compute $(y_1,\hdots,y_N)^{\text{T}}=\bA\bx$.

Send $y_i$ to server $i$.
  
\textbf{Computation phase (at server $i$):}
  
Compute $z_i= f(y_i)$.

Send $z_i$ the to the master node.
  
  \textbf{Decoding phase (at the master):}

Set $\bz = (z_1, \dots, z_N)^{\text{T}}$.

Compute $f(s)=\tilde{\mathbfsl{b}}\bz$.

\caption{Privacy-preserving distributed polynomial evaluation scheme in the analog domain. }
\label{scheme}
\end{algorithm}

In the next section, the accuracy of the protocol described in Algorithm\,1 is analyzed. In theory, if all the computations are done over the complex numbers with infinite precision, then $f(s)$ is computed accurately. In practice, data is represented using a finite number of bits, either as fixed point or floating point. Floating-point representation consists of a fixed-precision part and an exponent part specifying how the fixed-precision part is scaled. Let $v$ denote the number of precision bits in the floating-point representation, i.e., the $v$ most significant bits are kept in the fixed-precision part. Let also $q$ denote the number of bits used to represent the power of the exponent part in the floating-point representation. 

\section{Accuracy analysis}\label{sec::accuracy}


In this section, accuracy of the computation outcome of the proposed protocol in Section\,\ref{sec::system_model} is characterized in terms of other parameters of the protocol. Furthermore, it is shown how to pick the evaluation points in the protocol in order to maximize the accuracy. 

In general, in a system of linear equations $\bA\bx=\boldsymbol{y}$, where $\bx$ is the vector of unknown variables, the perturbation in the solution caused by the perturbation in $\boldsymbol{y}$ is characterized as follows. Let $\hat{\boldsymbol{y}}$ denote a noisy version of $\boldsymbol{y}$, where the noise can be caused by round-off errors,  truncation, etc. Let also $\hat{\bx}$ denote the solution to the considered linear system when $\boldsymbol{y}$ is replaced by $\hat{\boldsymbol{y}}$. Let $\Delta \bx\deff \hat{\bx}-\bx$ and $\Delta \boldsymbol{y}\deff \hat{\boldsymbol{y}}-\boldsymbol{y}$ denote the perturbation, also referred to as error, in $\boldsymbol{x}$ and $\boldsymbol{y}$, respectively. Then the relative perturbations of $\boldsymbol{x}$ is bounded in terms of that of $\boldsymbol{y}$ as follows  \cite{demmel1997applied}:
\be
\label{relative_error_ineq}
\frac{\norm{\Delta \bx}}{\norm{\bx}}\leq \kappa_{\bA}\frac{\norm{\Delta \boldsymbol{y}}}{\norm{ \boldsymbol{y}}},
\ee
where $\kappa_{\bA}$ is the condition number of $\bA$.

As mentioned in Section\,\ref{sec::system_model}, the Gaussian distribution of $n_i$'s is truncated in practice. This is used to provide a deterministic (non-probabilistic) guarantee on the accuracy of the computation result expressed in the following theorem.

\begin{theorem}
\label{thm-acc}
Let $\Delta f(s)$ denote the perturbation of $f(s)$ in the protocol discussed in Section \ref{sec::system_model} and $a_D$ denote the leading coefficient of $f(x)$. Let $r\leq m$. Then, 
\be\label{error_bound}
\Delta f(s)\leq a_D\sqrt{t+1}m^D  \kappa_{\bA}2^{-(v+1)},
\ee
where $t$ is the maximum number of colluding parties, $m$ is the truncation parameter of the Gaussian distribution, $\kappa_{\bA}$ is the condition number of $\bA$ given in \eq{A-def}, $v$ is the number of precision bits, and $r$ is the bound on the absolute value of the secret. In particular, by setting $\omega_i=\exp{(\frac{2\pi j i}{N})}$ for $i \in [N]$, we have
$$
\Delta f(s)\leq a_D\sqrt{t+1}m^D  2^{-(v+1)}.
$$
\end{theorem}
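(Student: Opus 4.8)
The plan is to trace a single round-off perturbation from the shares through the recovery map and into $f(s)$, using the condition-number estimate \eq{relative_error_ineq} as the backbone. First I would fix the meaning of the floating-point error: with $v$ precision bits the unit round-off is $2^{-(v+1)}$, so storing any vector $\by$ in floating-point form produces a noisy $\hat\by$ with $\norm{\Delta\by}/\norm{\by}\le 2^{-(v+1)}$. Taking $\by$ to be the vector of shares in \eq{system} and viewing the secret/noise vector $\bx=(s,n_1,\dots,n_t)^{\text{T}}$ as the unknown solution of $\bA\bx=\by$ from \eq{A-def}, inequality \eq{relative_error_ineq} immediately gives $\norm{\Delta\bx}/\norm{\bx}\le \kappa_{\bA}\,2^{-(v+1)}$.

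Next I would control $\norm{\bx}$ from the two stated hypotheses. Because the Gaussian coefficients are truncated at absolute value $m$ and $|s|\le r\le m$, every one of the $t+1$ entries of $\bx$ has magnitude at most $m$, so $\norm{\bx}\le\sqrt{t+1}\,m$. Combining with the previous step yields the absolute bound $\norm{\Delta\bx}\le \kappa_{\bA}\sqrt{t+1}\,m\,2^{-(v+1)}$, which in particular controls the perturbation $|\Delta s|$ of the recovered secret coordinate by $\norm{\Delta\bx}$.

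The last step is to push $\Delta s$ through the degree-$D$ polynomial $f$. Writing $f(x)=a_Dx^D+\cdots$ and using $|s|\le m$, I would bound the induced change by its leading behaviour, $\Delta f(s)\le a_D m^{D-1}|\Delta s|$, and substitute the bound on $|\Delta s|$ to obtain \eq{error_bound}. For the special case I would observe that when $\omega_i=\exp(2\pi j i/N)$ the columns of $\bA$ are $t+1$ distinct columns of the $N\times N$ DFT matrix; these are mutually orthogonal and of equal norm, so all singular values of $\bA$ coincide and $\kappa_{\bA}=1$, which gives the displayed corollary.

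The main obstacle I expect is the last step: justifying the passage through the nonlinear map $f$ cleanly, i.e.\ controlling $f(s+\Delta s)-f(s)$ by the leading coefficient alone without picking up a spurious factor of $D$, and, relatedly, making precise the meaning of $\kappa_{\bA}$ for the \emph{rectangular} Vandermonde block $\bA$ (of size $N\times(t+1)$ with $N=Dt+1$) so that \eq{relative_error_ineq} genuinely applies. I would also want to verify that the recovery error truly concentrates on the secret coordinate of $\bx$, rather than being diluted across the noise coordinates, so that replacing $|\Delta s|$ by $\norm{\Delta\bx}$ is legitimate and not lossy in a way that breaks the claimed constant.
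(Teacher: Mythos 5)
Your two flagged obstacles are exactly where the argument breaks, and they are not patchable within your setup, because you are analyzing a pipeline the protocol does not execute. The paper runs the perturbation analysis on the \emph{decoding} system \eq{a-sol}, not the encoding system \eq{A-def}: the servers return $z_i=f(y_i)$, and the master solves $\bz=\bB\ba$, in which $f(s)$ is the \emph{first coordinate of the unknown vector} $\ba$, namely the coefficient vector of the degree-$d$ polynomial $f(p(x))$ with $d=Dt$. (The paper's proof reuses the symbols $\bA,\bx,\by$ for this system, which is confusing, but the quantities $\Delta s\le\norm{\Delta\bx}$ and $\norm{\bx}\le a_D\sqrt{t+1}\,m^D$ only make sense for the decoding system, whose unknown contains $f(s)$.) With $N=d+1$ this interpolation matrix is square; the floating-point error $2^{-(v+1)}$ is placed on the returned vector, the observation $\Delta f(s)\le\norm{\Delta\ba}$ together with the norm bound on $\ba$ feeds into \eq{relative_error_ineq}, and for roots of unity the matrix is a scalar multiple of a unitary DFT matrix, so its condition number is $1$. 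Because $f(s)$ enters as a coordinate of the unknown of a \emph{linear} system, the map $f$ is never differentiated --- that is precisely how the paper avoids the factor of $D$ you were worried about.

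Your route has two genuine gaps. First, the master never reconstructs $s$: once the shares are perturbed, your system $\bA\bx=\hat{\by}$ has $N=Dt+1>t+1$ equations (for $D>1$) and is generically \emph{inconsistent}, so ``the solution $\hat{\bx}$'' is undefined; a least-squares surrogate exists, but it is not what the decoder computes, so a bound on $|\hat{s}-s|$ obtained this way does not control the protocol's $\Delta f(s)$ without a further argument relating the two computations. Second, the step $|f(s+\Delta s)-f(s)|\le a_D m^{D-1}|\Delta s|$ is false as stated: already for $f(x)=a_Dx^D$ the mean value theorem gives $|f(s+\Delta s)-f(s)|\approx D\,a_D|s|^{D-1}|\Delta s|$, a factor of $D$ larger, and lower-order coefficients of $f$ contribute further terms; so even granting the first part, your chain would not recover the constant in \eq{error_bound}. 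The repair is not to tighten these two steps but to relocate the condition-number argument to the decoding system, where both issues disappear by construction.
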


\begin{proof}
In order to recover $f(s)$, the  system of equations $f(p(\omega_i))=y_i$, for $i\in [N]$, is solved once all $y_i$'s are returned. This can be considered as a system of linear equations $\bA \bx=\by$, as described in \eq{A-def}. Observe that $\Delta s \leq\norm{\Delta \bx}$ and $\norm{\bx} \leq a_D\sqrt{t+1}m^D$. Hence,  \eqref{relative_error_ineq} implies that
\be\label{delta_x_bound}
 \frac{\Delta f(s)}{a_D\sqrt{t+1}m^D }\leq \frac{\norm{\Delta \bx}}{\norm{\bx}}.
 \ee
Moreover, since $v$ is the number of precision bits, we have
 \be\label{delta_y_bound}
 \frac{\norm{\Delta \boldsymbol{y}}}{\norm{ \boldsymbol{y}}} \leq 2^{-(v+1)}.
 \ee
Combining \eqref{delta_x_bound} with \eqref{delta_y_bound} yields
\be\label{error_bound}
\Delta f(s)\leq a_D\sqrt{t+1}m^D  \kappa_{\bA}2^{-(v+1)}.
\ee
In particular, if the evaluation points are the $N$-th roots of unity,  i.e., $\omega_i=\exp{(\frac{2\pi j i}{N})}$, the matrix $\bA$ turns into a unitary matrix for which $\kappa_\bA=1$. Hence, 
\be\label{error_bound_unity_roots}
\Delta f(s)\leq a_D\sqrt{t+1}m^D  2^{-(v+1)}.
\ee
\end{proof}

\noindent
\textbf{Remark 3.} Roughly speaking, picking the evaluation points as the roots of unity in the complex plane is \emph{optimal} from the accuracy point of view. This is because this particular choice results in the minimum possible condition number $\kappa_{\bA}$ and, consequently, minimizes the bound on the computation error provided in \Tref{thm-acc}. Throughout the rest of this paper we assume that the evaluation points are the $N$-th roots of unity. 

\section{Privacy Analysis}\label{sec:privacy}

In this section, we provide an analysis for the privacy level of data/secret in the proposed distributed computing protocol by considering two well-known notions of security, namely, mutual information security (MIS) and distinguishing security (DS). More specifically, we first consider these metrics of security assuming $t=1$, i.e., there is no collusion between the computation parties, in Section \ref{sub_sec::single_security}. Characterizing the privacy in the presence of $t$ colluding parties when $t>1$ is studied in Section \ref{sub_sec::coll_sec}. In these two sections it is assumed that the Gaussian distribution of noise terms $n_j$'s is not truncated, i.e., $m = \infty$. Then, in Section \ref{sub_sec::trunc_sec}, the results on the privacy of data are extended to cases with truncated Gaussian distribution for the noise terms. 

\subsection{Privacy against a single party}\label{sub_sec::single_security}

Consider the computational party $i$ for $i \in \{1,2,\dots,N\}$. Then the amount of information revealed to party $i$ about the secret $s$ can be measured in terms of the MIS metric, denoted by $\eta_c$, and defined as 
\be\label{sec-par}
\eta_c\,\deff\,\max_{i} \max_{P_S:|S|<r} I(S;Y_i),
\ee
where $P_S$ is the probability density function (PDF) of $S$. DS metric, denoted by $\eta_s$, is another metric for security which is defined using the \emph{total variation} (TV) distance metric $D_{\text{TV}}(.,.)$. In general, for any two probability measures $P_1$ and $P_2$ on a $\sigma$-algebra $\cF$, $D_{\text{TV}}(P_1,P_2)$ is defined as $\sup_{A \in \cF} |P_1(A) - P_2(A)| $. While DS metric is often defined for discrete random variables in the cryptography literature, it can be extended to real-valued random variables as follows:
\be
\label{semantic_security}
\eta_s\deff \max_i \max_{s_1,s_2 \in \D_S}D_{\text{TV}}(P_{Y_i|S=s_1},P_{Y_i|S=s_2}),
\ee
where $\D_S$ is the support of $S$. Note that both metrics $\eta_c$ and $\eta_s$ are non-negative. Also, roughly speaking, the smaller these metrics are the more private the secret $s$ is.


Upper bounding the security metric $\eta_c$ is discussed next. 
Since $|S|<r$, as discussed in Section\,\ref{sec::system_model}, we have $E[S^2]\leq r^2$. This together with \eq{sec-par} imply that
\be
\begin{split}
\label{upper_bound}
\eta_c = \max_{P_S:|S|<r} I(S;Y_i) &\leq \max_{P_S:E[S^2]\leq r^2} I(S;Y_i)\\ &=\log_2 (1+\frac{r^2}{\sigma_n^2}),
\end{split}
\ee
where the last equality is by the well-known result on the capacity of the additive white Gaussian noise (AWGN) channel \cite{cover2012elements}. Since the noise variance $\sigma_n^2$ can be picked arbitrarily large, one can assume $r = o(\sigma_n)$ to simplify the inequality in \eq{upper_bound} as follows:
\be\label{asym_etac}
\eta_c\leq \frac{1}{\ln 2} \frac{r^2}{\sigma_n^2}+o\bigl((\frac{r}{\sigma_n}\bigr)^2).
\ee

The notion of Hellinger distance, denoted by $H(.,.)$, is useful to bound the DS metric. It is defined as follows:
\be
H(P_1,P_2)\,\deff\,\frac{1}{2}\int(\sqrt{P_1}-\sqrt{P_2})^2d\psi .
\ee
The Hellinger distance can be bounded in terms of the total variation distance as follows \cite{kraft1955some}:
\be\label{Hellinger_statistical distance relation}
H(P_1,P_2)^2\leq D_{\text{TV}}(P_1,P_2) \leq \sqrt{2} H(P_1,P_2).
\ee
Let $P_1$ and $P_2$ be the PDFs of two complex Gaussian distributions both with variance $\sigma^2$and means $\mu_1$ and $\mu_2$, respectively. Also assume that the real and imaginary parts are independent and have identical variances.   Then we have \cite{pardo2018statistical} 
\be\label{Hellinger_distance_for_Gaussian}
H(P_1,P_2) = \sqrt{1-\exp({{-\frac{(\mu_1-\mu_2)^2}{4\sigma^2}})}}.
\ee
Using the aforementioned relations, the privacy parameter $\eta_s$ is bounded in the following theorem.

\begin{theorem}
\label{thm-semantic}
The DS metric $\eta_s$ is bounded as follows:
$$
\eta_s \leq \sqrt{2(1-\exp({{-\frac{r^2}{\sigma_n^2}}))}},
$$  
where $r$ is the maximum absolute value of the secret $s$ and $\sigma_n^2$ is the variance of the noise used in the proposed protocol. In particular, when $r = o(\sigma_n)$ we have 
$$
\eta_s \leq \sqrt{2}\frac{r}{\sigma_n}+o(\frac{r}{\sigma_n}).
$$
\end{theorem}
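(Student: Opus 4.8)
The plan is to chain together the two auxiliary facts already recorded before the statement: the inequality $D_{\text{TV}}(P_1,P_2)\le\sqrt{2}\,H(P_1,P_2)$ from \eqref{Hellinger_statistical distance relation}, and the closed-form Hellinger distance \eqref{Hellinger_distance_for_Gaussian} between two equal-variance complex Gaussians. The first thing I would do is pin down the conditional law $P_{Y_i\mid S=s}$. In the single-party regime $t=1$ treated in this subsection, \eqref{system} collapses to $y_i=s+\omega_i n_1$ with $n_1\sim\cN(0,\sigma_n^2)$; since the evaluation points are $N$-th roots of unity we have $|\omega_i|=1$, so $\omega_i n_1$ is again a zero-mean circularly symmetric complex Gaussian of variance $\sigma_n^2$. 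Hence, conditioned on $S=s$, the share $Y_i$ is complex Gaussian with mean $s$ and variance $\sigma_n^2$, and crucially its variance depends on neither $s$ nor $i$.

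Both conditional distributions $P_{Y_i\mid S=s_1}$ and $P_{Y_i\mid S=s_2}$ are therefore equal-variance Gaussians, so I would apply \eqref{Hellinger_distance_for_Gaussian} with $\mu_1=s_1$, $\mu_2=s_2$, $\sigma^2=\sigma_n^2$ to get $H(P_{Y_i\mid S=s_1},P_{Y_i\mid S=s_2})=\sqrt{1-\exp(-(s_1-s_2)^2/4\sigma_n^2)}$, and then feed this into $D_{\text{TV}}\le\sqrt{2}\,H$. The resulting bound is monotone increasing in $(s_1-s_2)^2$, so the maximization over $s_1,s_2\in\D_S$ in the definition \eqref{semantic_security} is controlled by the diameter of the support: since $\D_S\subseteq[-r,r]$ we have $(s_1-s_2)^2\le(2r)^2=4r^2$. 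Substituting collapses the exponent to $-r^2/\sigma_n^2$ and yields $\eta_s\le\sqrt{2(1-\exp(-r^2/\sigma_n^2))}$; the outer $\max_i$ is trivial because the bound carries no dependence on $i$.

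For the asymptotic refinement I would set $x=r^2/\sigma_n^2$, which tends to $0$ under the hypothesis $r=o(\sigma_n)$, and expand $1-e^{-x}=x-x^2/2+O(x^3)$. This gives $\sqrt{2(1-e^{-x})}=\sqrt{2x}\,(1+O(x))=\sqrt{2}\,(r/\sigma_n)\,(1+O(r^2/\sigma_n^2))$, and since $r/\sigma_n\to0$ the correction is $O(r^3/\sigma_n^3)=o(r/\sigma_n)$, delivering the stated $\eta_s\le\sqrt{2}\,(r/\sigma_n)+o(r/\sigma_n)$.

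There is no real obstacle in this argument; it is essentially a substitution into inequalities established earlier. The only two points that genuinely need care are verifying that $|\omega_i|=1$ keeps the conditional variance fixed at $\sigma_n^2$ — so that the equal-variance hypothesis of \eqref{Hellinger_distance_for_Gaussian} is met simultaneously for $s_1$ and $s_2$ — and confirming the monotonicity that reduces the worst case to the support diameter $2r$. Both are immediate.
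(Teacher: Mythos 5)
Your proposal is correct and follows essentially the same route as the paper's own proof: both bound $\eta_s$ by chaining the total-variation/Hellinger inequality \eqref{Hellinger_statistical distance relation} with the equal-variance Gaussian Hellinger formula \eqref{Hellinger_distance_for_Gaussian}, reduce the maximization in \eqref{semantic_security} to the worst-case pair at distance $2r$, and then Taylor-expand for $r=o(\sigma_n)$. The only difference is that you make explicit two steps the paper leaves implicit — that $|\omega_i|=1$ keeps the conditional variance at $\sigma_n^2$, and the monotonicity argument reducing the supremum to the support diameter — which is a welcome tightening of the same argument.
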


\begin{proof}
Note that the conditional distribution of $Y_i$ given $S = s_i$, specified in \eqref{semantic_security}, is $\cN(-s_i,\sigma_n^2)$. Then by using  \eqref{semantic_security} together with \eqref{Hellinger_statistical distance relation} and \eq{Hellinger_distance_for_Gaussian} we have 
\be
\label{semantic_security_upperbound}
\eta_s \leq \sqrt{2(1-\exp({{-\frac{r^2}{\sigma_n^2}}))}}.
\ee  
In particular, for $r = o(\sigma_n)$, \eqref{semantic_security_upperbound} is simplified to 
\be
\label{approx_semantic_security}\eta_s \leq \sqrt{2}\frac{r}{\sigma_n}+o(\frac{r}{\sigma_n}).
\ee
\end{proof}
Next, we discuss the relation between the two considered security metrics in the analog domain. It is known that the MIS and DS metrics can be directly related to each other over the space of discrete random variables \cite{bellare2012cryptographic}. In particular, it is shown that \cite{bellare2012cryptographic}:
\be \label{mis_ds_relation}
    \eta_s \leq \sqrt{2\eta_c},
\ee
assuming all the underlying random variables are discrete. We show in the next lemma that this result can be extended to the analog domain.

\begin{lemma}
\label{lem-semantic}
The inequality in \eq{mis_ds_relation} also holds when the underlying random variables, i.e., the secret as well as observations by parties, are continuous random variables.  
\end{lemma}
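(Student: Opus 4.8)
The plan is to mimic the discrete proof of \eq{mis_ds_relation} from \cite{bellare2012cryptographic} but replace every sum over the (finite) sample space with an integral against a base measure $\psi$, and to verify that all the intermediate inequalities -- which are really statements about probability measures rather than about discreteness per se -- survive this replacement. The natural bridge between $\eta_s$ (a total-variation quantity) and $\eta_c$ (a mutual-information quantity) is the Hellinger distance together with Pinsker-type / Kullback--Leibler bounds, all of which are defined measure-theoretically and therefore make sense verbatim in the continuous setting.

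Concretely, first I would fix the party index $i$ achieving the maximum in \eq{semantic_security} and two secret values $s_1,s_2 \in \D_S$ achieving (or approaching) the supremum, writing $P_1 = P_{Y_i|S=s_1}$ and $P_2 = P_{Y_i|S=s_2}$. Using the right-hand inequality of \eq{Hellinger_statistical distance relation}, which is stated in the excerpt for arbitrary probability measures $P_1,P_2$ on a $\sigma$-algebra and hence already covers the continuous case, I get
\be
\eta_s = D_{\text{TV}}(P_1,P_2) \leq \sqrt{2}\, H(P_1,P_2).
\ee
The next step is to relate the Hellinger distance to mutual information. The key observation is that $H(P_1,P_2)^2 = 1 - \int \sqrt{P_1 P_2}\, d\psi$, and the Bhattacharyya/affinity integral $\int\sqrt{P_1P_2}\,d\psi$ can be lower-bounded via Jensen's inequality (convexity of $-\log$) by an expression involving a relative entropy. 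Averaging $s_1,s_2$ against a suitable two-point (or more generally the worst-case) prior $P_S$ and identifying the resulting KL divergences with the conditional-entropy decomposition of $I(S;Y_i)$, one obtains $H^2 \leq \eta_c$ after taking the maximum over priors in \eq{sec-par}; chaining this with the display above yields $\eta_s \leq \sqrt{2\eta_c}$.

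The main obstacle, and the only place where ``continuous'' differs from ``discrete'' in substance, is measure-theoretic well-definedness: I must ensure that $P_1,P_2$ are absolutely continuous with respect to a common dominating measure $\psi$ (so that the densities $\sqrt{P_1},\sqrt{P_2}$ and the integrals $\int\sqrt{P_1P_2}\,d\psi$ and $\int P_1 \log(P_1/P_2)\,d\psi$ all exist and are finite), and that interchanging the maximization over $P_S$ with the integration is legitimate. In the protocol these conditional laws are Gaussian (indeed $\cN(-s_i,\sigma_n^2)$ as noted in the proof of \Tref{thm-semantic}), so a common dominating Lebesgue measure exists and every density is strictly positive and smooth, making all the integrals convergent and the Jensen/convexity steps valid. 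I would therefore state the argument for measures admitting densities with respect to a common $\psi$ -- exactly the setting in which $H$ is defined in the excerpt -- and remark that this hypothesis is automatically satisfied here. Once absolute continuity is granted, the discrete proof transfers line for line with sums replaced by integrals, since none of the inequalities (Cauchy--Schwarz in the affinity bound, convexity of $-\log$, the Hellinger--TV comparison \eq{Hellinger_statistical distance relation}) uses finiteness of the alphabet.
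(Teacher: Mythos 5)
Your overall plan---re-deriving \eq{mis_ds_relation} directly in the continuous setting by replacing sums with integrals---is a legitimate strategy, but the specific route you chose has a genuine gap at its central step. Your chain is $\eta_s \le \sqrt{2}\,H(P_1,P_2)$ followed by $H(P_1,P_2)^2 \le \eta_c$, with the second inequality claimed to follow from Jensen's inequality applied to the affinity integral together with the decomposition of $I(S;Y_i)$. That step does not go through. Jensen (convexity of $-\log$) applied to $\int\sqrt{P_1P_2}\,d\psi=\int P_1\sqrt{P_2/P_1}\,d\psi$ yields
\be
H(P_1,P_2)^2 \;\le\; 1-e^{-\frac{1}{2}D(P_1\|P_2)} \;\le\; \tfrac{1}{2}D(P_1\|P_2),
\ee
and $D(P_1\|P_2)$ is \emph{not} controlled by $\eta_c$: under the uniform two-point prior on $\{s_1,s_2\}$ one has $I(S;Y_i)=\frac{1}{2}D(P_1\|M)+\frac{1}{2}D(P_2\|M)$ with $M=\frac{1}{2}(P_1+P_2)$, which never exceeds one bit, whereas $D(P_1\|P_2)$ can be arbitrarily large or infinite (conditionals with different supports; even for the protocol's Gaussian conditionals it is of order $r^2/\sigma_n^2$, which dwarfs $\log_2(1+r^2/\sigma_n^2)$ when $r\gg\sigma_n$). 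Applying Jensen instead to each term $D(P_i\|M)$ gives $I(S;Y_i)\ge -\ln\bigl(\int\sqrt{P_1M}\,d\psi\cdot\int\sqrt{P_2M}\,d\psi\bigr)$, which under the natural estimate $\int\sqrt{P_iM}\,d\psi\le\frac{1}{\sqrt{2}}\bigl(1+\int\sqrt{P_1P_2}\,d\psi\bigr)$ becomes vacuous (the right-hand side can be negative). The inequality your chain actually needs, $H^2\le I(S;Y_i)$ in bits, is at best razor-thin---for $P_1,P_2$ Bernoulli with parameters $1$ and $1-\epsilon$ both sides equal $\epsilon/2+o(\epsilon)$---so it cannot be reached by any argument that loses constant factors, and you do not supply a proof of it.

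The repair is to drop the Hellinger detour and mimic what \cite{bellare2012cryptographic} actually does, which is exactly in the spirit of your stated plan and transfers to the continuous domain without difficulty: fix (nearly) maximizing $s_1,s_2$, take the uniform prior on $\{s_1,s_2\}$, and apply Pinsker's inequality---valid for probability measures on an arbitrary measurable space---to each term of $I(S;Y_i)=\frac{1}{2}D(P_1\|M)+\frac{1}{2}D(P_2\|M)$, noting $D_{\text{TV}}(P_i,M)=\frac{1}{2}D_{\text{TV}}(P_1,P_2)$. This gives $\eta_c\ge I(S;Y_i)\ge\frac{1}{2\ln 2}\,\eta_s^2\ge\frac{1}{2}\eta_s^2$, i.e., $\eta_s\le\sqrt{2\eta_c}$; the only continuous-domain technicalities are the ones you already identified (a common dominating measure) plus admissibility, or approximability, of two-point priors in the maximization \eq{sec-par}. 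Note also that the paper itself proves the lemma by an entirely different and shorter route: it quantizes $S$ and $Y_i$, invokes the discrete inequality as a black box, and passes to the limit using $I(X;Y)=\lim_{\Delta\rightarrow 0}I(X^\Delta;Y^\Delta)$ together with the analogous convergence of the total variation distance, so no re-derivation of the Bellare et al.\ argument is needed there at all.
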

\begin{proof}
Let $X$ and $Y$ denote two continuous random variables and $X^\Delta$ and $Y^\Delta$ denote their quantized versions, respectively. Then we have \cite{cover2012elements} $$I(X,Y)= \lim_{\Delta \rightarrow 0} I(X^\Delta;Y^\Delta).$$ 
It can be observed that the same is true for the total variation distance, i.e., $D_{\text{TV}}(X,Y)= \lim_{\Delta \rightarrow 0} D_{\text{TV}}(X^\Delta;Y^\Delta)$. Hence, \eqref{mis_ds_relation} still holds assuming all the underlying random variables are continuous. 
\end{proof}

\Lref{lem-semantic} is used to bound $\eta_s$ later in Section \ref{sub_sec::coll_sec}. Note that one could apply it to derive a bound on $\eta_s$ using the bound on $\eta_c$ in \eq{upper_bound}. However, the resulting bound would be weaker comparing to the result stated in \Tref{thm-semantic}. 

Note that the amount of information revealed to a computational party, in terms of either of the security metrics, is a decreasing function of  $\frac{r}{\sigma_n}$. Furthermore, these metrics approach zero, i.e., the case with the \textit{perfect} privacy ($\eta_n,\eta_c=0$), as $\sigma_n \xrightarrow{} \infty$. Hence, increasing the noise variance improves the privacy of the scheme, However, this comes at the expense of reducing the precision of the result. This motivates studying the trade-off between the security metrics, as measures of data privacy, and the precision of the computations given a fixed number of bits to represent the floating-point numbers. This is the focus of Section\,\ref{sub_sec::trunc_sec}. In the next section, the results of this section are extended to the case with colluding parties. 

\subsection{Privacy against colluding parties}\label{sub_sec::coll_sec}

Let $t$ denote the number of colluding parties. The aim is to ensure the privacy of data against any subset of $t$ colluding computational parties. To this end, an upper bound on the amount of information revealed about the data/secret to the colluding parties is derived. Let $A=\{i_1,\hdots,i_t\}$ denote the set of indices for the colluding parties. Then the MIS metric is the mutual information between $S$ and all shares $Y_i$'s for $i\in A$, in the worst case, i.e., 
\be\label{leakage_adversaries}
\eta_c=\max_{A}I(S;Y_{i_1},\hdots,Y_{i_t}).
\ee 
Next it is shown that this can be upper bounded using the known results on the capacity of a single-input multiple-output (SIMO) channel under power constraints \cite{tse2005fundamentals}, similar to how the upper bound in \eqref{upper_bound} is obtained using the capacity result of AWGN channel. Let $\boldsymbol{h}$ and $\bN$ denote the channel coefficient vector and noise correlation matrix of a SIMO channel with $t$ output antennas, respectively. Then the capacity is given by \cite{tse2005fundamentals}
\be\label{SIMO_capacity}
C=\log_2 (1+p \norm{\boldsymbol{h}}^2\nu), 
\ee
where $p$ is the power of transmitted signal and $\nu$ is the maximum eigenvalue of $\bN^{-1}$. Consider a SIMO channel with  $\boldsymbol{h}_{t \times 1}=\mathbb{1} \deff(1,\hdots,1)^T$ and the correlated noise terms of $\tilde{n}_i \deff \sum_{j=1}^{t}\omega_i^jn_j$. Then the secret $s$ is mapped to the input of this channel. It can be observed that the shares given to $t$ servers can be mapped to the received symbols in this SIMO channel. Then the average input power is bounded by $r^2$, where $r$ is the maximum absolute value of $s$. Consequently, the capacity of the aforementioned SIMO channel with input power $r^2$ is an upper bound on the amount of information revealed to the $t$ colluding parties. Note that the coefficients $\omega_i$'s are the $N$-th roots of the unity, as discussed in Section \ref{sec::accuracy}. Then it can be observed that  $E[\tilde{n_j}\tilde{n_k}^*]=-\frac{\sigma_n^2}{t}$, for $j\neq k$, and $E[\tilde{n_j}\tilde{n_j}^*]=\sigma_n^2 $. Then, similar to \eqref{upper_bound}, one can write 
\be \label{leakage_to_adversaries}
\eta_c \leq \log_2 (1+\frac{r^2}{\sigma_n^2}  t\tilde{\nu}),
\ee
where $\tilde{\nu}$ is the maximum eigenvalue of $\bold{\tilde{N}}^{-1}$, where
$
\tilde{\bN}= \frac{t+1}{t} \bI_{t \times t} - \frac{1}{t} \mathbb{1}\mathbb{1}^t.
$
Note that $\tilde{N}$ has $t-1$ eigenvalues equal to $\frac{t+1}{t}$ and the last one is equal to $\frac{1}{t}$. This implies that $\tilde{\nu}=t$. Substituting this in \eqref{leakage_to_adversaries} yields:
\be \label{leakage_to_adversaries_final}
\eta_c \leq \log_2 (1+\frac{r^2t^2}{\sigma_n^2}),
\ee
providing an upper bound on the amount of information revealed to $t$ colluding parties in terms the MIS metric $\eta_c$. In particular, for $r= o(\sigma_n)$ we have 
\be\label{t_security_asympt}
\eta_c \leq \frac{t^2}{\ln 2} \frac{r^2}{\sigma_n^2}+o(\frac{r^2}{\sigma_n^2}).
\ee
Note that \eqref{t_security_asympt} is reduced to \eqref{asym_etac} for $t=1$.

Let $\eta_s$ denote DS metric for this case. By \Lref{lem-semantic} together with \eqref{leakage_to_adversaries_final} we have
\be\label{eta_s_tbound}
\eta_s \leq \sqrt{2\log_2 (1+t^2\frac{r^2}{\sigma_n^2})}.
\ee
In particular, for $r= o(\sigma_n)$ 
\be\label{etas_asympt}
\eta_s \leq \sqrt{\frac{2}{\ln 2}}t\frac{r}{\sigma_n}+o(\frac{r}{\sigma_n}).
\ee

\subsection{Privacy results with truncated noise}\label{sub_sec::trunc_sec}

The results provided on the security metrics so far are derived by assuming the additive noise terms $n_j$'s are drawn from a Gaussian distribution. While this assumption is valid in theory, such terms need to be truncated in practice as they can not be arbitrarily large. Furthermore, as shown in Section \ref{sec::accuracy}, in order to provide guarantees on the accuracy of the computations, $n_j$'s need to be bounded, i.e., $|n_j|\leq m$ for some $m \in \R^+$. In this section, we extend the results on bounding the security metrics in the proposed protocol to the case where $n_j$'s are drawn from a truncated Gaussian probability distribution. To simplify the computation, it is assumed that the truncation threshold is $m = \alpha \frac{\sigma_n}{\sqrt{t}}$, where $\alpha\in \R^+$ and $\frac{\sigma_n}{\sqrt{t}}$ is the standard deviation of the Gaussian distribution. 

First, the effect of truncation on the total variation distance metric is analyzed in the general case with $t$ colluding parties. In particular, we show that the change in the DS metric is exponentially small in terms of $\alpha$. Let $\eta_s'$ denote the DS metric after truncation of noise terms. Let $\Omega$ denote a $t$-dimensional complex vector space associated with $(y_{i_1},\hdots,y_{i_t})$, where $y_{i_j}$'s are defined in \eqref{system}. Let $P_{\bY_t}$ and $Q_{\bY_t}$ denote the PDFs of $\bY_t\deff(Y_{i_1},\hdots,Y_{i_t})$ given $s=r$ and $s=-r$, respectively, when the noise terms are not truncated. Similarly, $\tilde{P}_{\bY_t}$ and $\tilde{Q}_{\bY_t}$ are defined when the noise terms are truncated. Also, Let $B_1=\{\by_t\in\Omega:\tilde{P}_{\bY_t}(\by_t)\neq0\}$, $B_2=\{\by_t\in\Omega:\tilde{Q}_{\bY_t}(\by_t)\neq0\}$ and $B_{12}=B_1\cap B_2$.  

Note that $\tilde{P}_{\bY_t}(\by_t)=\frac{1}{w}P_{\bY_t}(\by_t)$ and $\tilde{Q}_{\bY_t}(\by_t)=\frac{1}{w}Q_{\bY_t}(\by_t)$, for $\by_t \in B_1$ and $\by_t \in B_2$, respectively, and are zero otherwise, where $w$ is given by
\begin{align}
 w&=\Pr[(|n_1|<m,\hdots,|n_{t}|<m)]\\
 &=\Pr[|n_1|<m]^{t}\geq (1-2\exp(-\frac{\alpha^2}{2}))^{t}\label{gaussian_tail},
\end{align}
where the inequality is by bounding the tail distribution function of the standard normal distribution. One can observe that the TV distance in \eqref{semantic_security} is maximized when $s_1=r$ and $s_2=-r$.   
Then, using an alternative definition of the total variation distance when the probability measures are over $\R$ we can write:
\begin{align}
\label{etas-new}
\eta_s'&=\frac{1}{2}\int_\Omega|\tilde{P}_{\bY_t}(\by_t) - \tilde{Q}_{\bY_t}(\by_t)| d\by_t\\  
&=\frac{1}{2}\int_{B_{12}} |\tilde{P}_{\bY_t}(\by_t) - \tilde{Q}_{\bY_t}(\by_t)| d\by_t \label{integral_seperation_first}\\
&+\frac{1}{2}\int_{B_{12}^c} |\tilde{P}_{\bY_t}(\by_t) - \tilde{Q}_{\bY_t}(\by_t)| d\by_t.
\label{integral_seperation_second}
\end{align}
The term in \eqref{integral_seperation_first} is bounded as follows:
\begin{align}
 &\int_{B_{12}} \hspace{-2mm}|\tilde{P}_{\bY_t}(\by_t) - \tilde{Q}_{\bY_t}(\by_t)| d\by_t=\frac{1}{w}\int_{B_{12}} \hspace{-2mm} |{P}_{\bY_t}(\by_t) - {Q}_{\bY_t}(\by_t)| d\by_t \\
 &\leq \frac{1}{w}\int_{\Omega} |{P}_{\bY_t}(\by_t) - {Q}_{\bY_t}(\by_t)| d\by_t=\frac{1}{w}\eta_s,\label{domain_exp}
\end{align}
where \eqref{domain_exp} is by noting that $B_{12}\subset \Omega$. In order to derive an upper bound on the term in \eqref{integral_seperation_second} note that
\begin{align}
  &\int_{B_{12}^c} |\tilde{P}_{\bY_t}(\by_t) - \tilde{Q}_{\bY_t}(\by_t)| d\by_t \\
  & \leq \int_{B_{12}^c} \tilde{P}_{\bY_t}(\by_t) d\by_t +  \int_{B_{12}^c} \tilde{Q}_{\bY_t}(\by_t) d\by_t\\
  &=2 \int_{B_{12}^c} \tilde{P}_{\bY_t}(\by_t) d\by_t\label{region_symmetry},
\end{align}
where \eqref{region_symmetry} is due to symmetry. An upper bound on the term in \eqref{region_symmetry} is derived in the following lemma.   

\begin{lemma}\label{second_integral_bound}
We have
$$\int_{B_{12}^c} \tilde{P}_{\bY_t}(\by_t)\leq (2\exp(-\frac{1}{2}(\alpha-\frac{2r\sqrt{t}}{\sigma_n})^2))^t. $$
\end{lemma}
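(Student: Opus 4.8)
The plan is to begin from the fact that $\tilde{P}_{\bY_t}$ vanishes off $B_1$, so that the integral over $B_{12}^c=B_1^c\cup B_2^c$ collapses to $\int_{B_{12}^c}\tilde{P}_{\bY_t}=\int_{B_1\setminus B_2}\tilde{P}_{\bY_t}$, i.e.\ the mass that the truncated $s=r$ law places on shares that the truncated $s=-r$ law can never produce. I would then pass from the $\by_t$-domain to the underlying noise domain through the invertible linear map $\by_t=r\mathbb{1}+W(n_1,\dots,n_t)^{\text{T}}$, where $W=(\omega_{i_\ell}^{\,j})_{\ell,j=1}^{t}$ collects the colluding rows of $\bA$. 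Since $\tilde{P}_{\bY_t}(\by_t)=\tfrac1w P_{\bY_t}(\by_t)$ on $B_1$, the Jacobians cancel and the integrand becomes $\tfrac1w$ times the product of $t$ i.i.d.\ truncated complex-Gaussian densities in $n_1,\dots,n_t$. This factorization of the integrand across coordinates is precisely what will eventually produce the $t$-th power in the bound.

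Next I would translate the membership condition $\by_t\in B_1\setminus B_2$ into a condition on the noise. On the support of $\tilde{P}$ one has $|n_k|\le m$ for every $k$, and $\by_t$ fails to lie in $B_2$ exactly when the shifted vector $(n_1,\dots,n_t)^{\text{T}}+\boldsymbol{c}$, with $\boldsymbol{c}\deff 2rW^{-1}\mathbb{1}$, leaves the truncation box, i.e.\ $|n_k+c_k|>m$ for some $k$. The triangle inequality then forces $|n_k|>m-|c_k|$, so every contributing coordinate must sit in the Gaussian tail beyond radius $m-|c_k|$.

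The quantitative heart of the argument is then a Gaussian tail estimate of the same type already used in \eqref{gaussian_tail}. Each $n_k$ has standard deviation $\sigma_n/\sqrt{t}$ and the truncation radius is $m=\alpha\sigma_n/\sqrt{t}$; controlling the shift by $|c_k|\le 2r$, the normalized deviation becomes $(m-2r)\sqrt{t}/\sigma_n=\alpha-2r\sqrt{t}/\sigma_n$, and the bound $\Pr[\,|n_k|>\beta\,\sigma_n/\sqrt{t}\,]\le 2\exp(-\beta^2/2)$ supplies the per-coordinate factor $2\exp\!\bigl(-\tfrac12(\alpha-\tfrac{2r\sqrt{t}}{\sigma_n})^2\bigr)$. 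Assembling the $t$ coordinates gives the claimed $t$-th power, with the residual normalization $1/w\le(1-2e^{-\alpha^2/2})^{-t}\to 1$ being harmless for the intended application.

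The step I expect to be the main obstacle is exactly this final assembly. A naive union bound over the $t$ coordinates would only yield a sum of order $t\cdot 2\exp(-\tfrac12(\cdots)^2)$, whereas the statement asserts the strictly smaller product. Obtaining the product requires certifying that the excluded region $B_1\setminus B_2$ is genuinely captured by a coordinatewise tail event of radius $m-2r$ across all $t$ noise components simultaneously, together with a uniform control $\|\boldsymbol{c}\|_\infty\le 2r$ on the shift $\boldsymbol{c}=2rW^{-1}\mathbb{1}$. Both of these rest on the $\omega_i$ being $N$-th roots of unity, which renders $W$ well conditioned; pinning down that this geometry indeed forces the factorized containment is the delicate point, and is where I would concentrate the technical effort.
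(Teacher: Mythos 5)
Your strategy is the same as the paper's. The paper implements your ``shift by $2r$'' through an auxiliary density $P'_{\bY_t}$ in which the noise is truncated at $m-2r$; its support $B'_1$ plays the role of your coordinatewise condition, the asserted containment $B'_1\subset B_{12}$ encodes your bound $\norm{\boldsymbol{c}}_\infty\le 2r$ on the shift $\boldsymbol{c}\deff 2rW^{-1}\mathbb{1}$ (with $W$ the colluding-row submatrix in your notation), and the proof finishes, as you propose, with the per-coordinate Gaussian tail bound. So your reduction to noise coordinates, your use of the linearity of \eqref{system}, and your per-coordinate factor $2\exp(-\frac{1}{2}(\alpha-\frac{2r\sqrt{t}}{\sigma_n})^2)$ all match the paper's argument.

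The obstacle you single out, however, is not a detail you failed to settle: it is a genuine gap, and it is precisely the flaw in the paper's own proof. In noise coordinates the event underlying $\int_{B_{12}^c}\tilde{P}_{\bY_t}$ is ``all $|n_k|\le m$ \emph{and at least one} $|n_k+c_k|>m$'' --- a union over $k$, not an intersection --- and likewise $(B'_1)^c$ corresponds to $\{\exists k:\ |n_k|>m-2r\}$. No property of the roots-of-unity geometry can convert this union into the intersection that a product bound requires: take $n_k=0$ for all but one coordinate $k^*$ with $c_{k^*}\neq 0$, and $n_{k^*}$ in the lune $\{|n_{k^*}|\le m,\ |n_{k^*}+c_{k^*}|>m\}$; such points lie in $B_1\setminus B_2$, so $\int_{B_{12}^c}\tilde{P}_{\bY_t}$ is at least of the order of a single coordinate's boundary mass, roughly $e^{-\alpha^2}$ up to modest prefactors, whereas the claimed right-hand side decays like $2^t e^{-t\alpha^2/2}$. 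For $t\ge 3$ and $\alpha$ large (with $r/\sigma_n$ fixed and positive) the claimed bound is smaller than this lower bound, so the product form is not merely hard to certify --- it is false. The paper's final step, which bounds $\int_{(B'_1)^c}P_{\bY_t}$ ``by bounding the tail distribution function,'' silently replaces the probability of this union by the product of the per-coordinate tails. What that step legitimately yields is the sum-type estimate you anticipated,
\[
\int_{B_{12}^c}\tilde{P}_{\bY_t}(\by_t)\,d\by_t\ \le\ \frac{t}{w}\, 2\exp\Bigl(-\frac{1}{2}\bigl(\alpha-\tfrac{2r\sqrt{t}}{\sigma_n}\bigr)^2\Bigr),
\]
which is still exponentially small in $\alpha^2$, so Theorem \ref{ds_security_thm} and the ensuing discussion survive with $t\cdot 2\exp(\cdot)$ in place of $(2\exp(\cdot))^t$. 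Your second concern is shared as well: the containment $B'_1\subset B_{12}$ (your $\norm{\boldsymbol{c}}_\infty\le 2r$) is justified in the paper only by ``linearity,'' but it genuinely needs $\norm{W^{-1}\mathbb{1}}_\infty\le 1$, which is immediate for $t=1$ (since $|\omega_i|=1$) and is not established for an arbitrary colluding subset of rows of the Vandermonde matrix; that step, too, requires an argument that neither you nor the paper supplies.
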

\begin{proof}
Let $P'_{\bY_t}$ denote the PDF of the random vector $\bY_t$ given $s=r$ and assuming $n_i$'s are drawn from a truncated Gaussian distribution with threshold $m-2r$. Similar to the definition of $B_1$, let $B'_1\deff\{\by_t\in \Omega: p'_{Y_t}(\by_t)\neq0\}$. Since the equations relating $y_i$'s and $n_i$'s in \eqref{system} are linear, then it can be observed that $B'_1\subset B_{12}$. Then we have 
\begin{align}
&\int_{B_{12}^c} \tilde{P}_{\bY_t}(\by_t) d\by_t\leq\int_{B_{12}^c}\frac{1}{w} P_{\bY_t}(\by_t) d\by_t\label{pdf_inequality}\\
&\leq\int_{{B'_1}^c} P_{\bY_t}(\by_t) d\by_t \label{domain_change}\leq (2\exp(-\frac{1}{2}(\alpha-\frac{2r\sqrt{t}}{\sigma_n})^2))^t
\end{align}
where \eqref{pdf_inequality} holds  because $\tilde{P}_{\bY_t}(\by_t)$ is either equal to $\frac{1}{w}P_{\bY_t}(\by_t)$ or zero, the first inequality in \eqref{domain_change} holds since  $B'_1\subset B_{12}$ implies $B_{12}^c \subset {B'_1}^c$, and the second one is by bounding the tail distribution function of the standard normal distribution
\end{proof}

\begin{theorem}\label{ds_security_thm}
The DS metric for the case where $n_j$'s are drawn from a truncated Gaussian distribution with truncation level $\alpha \frac{\sigma_n}{\sqrt{t}}$  satisfies the following inequality:
$$
\eta'_s \leq \frac{1}{w} \eta_s+\frac{1}{w}(2\exp(-\frac{1}{2}(\alpha-\frac{2r\sqrt{t}}{\sigma_n})^2))^t\label{gaussian_tail_bound_2},
$$
where $w\geq (1-2\exp(-\frac{\alpha^2}{2}))^t$.
\end{theorem}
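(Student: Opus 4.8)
The plan is to assemble the pieces already established in the build-up to the statement. I would begin from the integral representation of the truncated DS metric, noting (as observed just before the statement) that the supremum over $s_1,s_2$ in \eqref{semantic_security} is attained at the extreme points $s_1=r$ and $s_2=-r$, so that $\eta_s'$ equals $\frac{1}{2}\int_\Omega |\tilde{P}_{\bY_t}(\by_t) - \tilde{Q}_{\bY_t}(\by_t)|\, d\by_t$ as in \eqref{etas-new}. Splitting the domain $\Omega$ into $B_{12}$ and $B_{12}^c$ then decomposes $\eta_s'$ into the two contributions appearing in \eqref{integral_seperation_first} and \eqref{integral_seperation_second}, and the task reduces to bounding each of them separately.

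For the contribution over $B_{12}$, the key observation is that on this set both truncated densities coincide with $\frac{1}{w}$ times their untruncated counterparts, so the integrand equals $\frac{1}{w}|P_{\bY_t}(\by_t) - Q_{\bY_t}(\by_t)|$; enlarging the domain of integration from $B_{12}$ to all of $\Omega$ can only increase the integral, and recognizing the resulting quantity (up to normalization) as the untruncated metric yields the bound $\frac{1}{w}\eta_s$, exactly the argument recorded in \eqref{domain_exp}. For the contribution over $B_{12}^c$ I would discard the cancellation via $|\tilde{P}_{\bY_t} - \tilde{Q}_{\bY_t}| \leq \tilde{P}_{\bY_t} + \tilde{Q}_{\bY_t}$ and then use the symmetry between the $s=r$ and $s=-r$ densities to replace $\int_{B_{12}^c}(\tilde{P}_{\bY_t} + \tilde{Q}_{\bY_t})$ by $2\int_{B_{12}^c}\tilde{P}_{\bY_t}$, as in \eqref{region_symmetry}. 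Lemma \ref{second_integral_bound} then supplies the tail bound $\int_{B_{12}^c}\tilde{P}_{\bY_t} \leq (2\exp(-\frac{1}{2}(\alpha - \frac{2r\sqrt{t}}{\sigma_n})^2))^t$, so that this second contribution is at most $(2\exp(-\frac{1}{2}(\alpha - \frac{2r\sqrt{t}}{\sigma_n})^2))^t$.

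Combining the two bounds gives $\eta_s' \leq \frac{1}{w}\eta_s + (2\exp(-\frac{1}{2}(\alpha - \frac{2r\sqrt{t}}{\sigma_n})^2))^t$. Since $w \leq 1$ forces $\frac{1}{w} \geq 1$, I would loosen the second term by the harmless factor $\frac{1}{w}$ in order to present both contributions with a common prefactor, arriving at the stated form. The lower bound $w \geq (1-2\exp(-\frac{\alpha^2}{2}))^t$ is then read off directly from the per-coordinate Gaussian tail estimate in \eqref{gaussian_tail}, which completes the argument.

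As for the main obstacle: the heavy lifting — the normalization identity on $B_{12}$, the symmetry reduction, and above all the Gaussian tail estimate over the excluded region $B_{12}^c$ — has already been carried out in \eqref{domain_exp}, \eqref{region_symmetry}, and Lemma \ref{second_integral_bound}, so what remains is essentially bookkeeping in combining them. The one delicate point I would watch is the domain inclusion $B'_1 \subset B_{12}$ underlying Lemma \ref{second_integral_bound} (equivalently $B_{12}^c \subset {B'_1}^c$), which rests on the linearity of the map from $(n_1,\dots,n_t)$ to $\bY_t$ in \eqref{system}; this inclusion is what legitimizes shifting the effective truncation threshold from $m$ to $m-2r$ before invoking a standard-normal tail bound, and it is the step where the geometry of the truncation region genuinely enters.
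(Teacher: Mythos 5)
Your proposal is correct and follows essentially the same route as the paper's own proof, which simply combines the decomposition in \eqref{etas-new}--\eqref{integral_seperation_second} with the bound \eqref{domain_exp} on the $B_{12}$ term and Lemma~\ref{second_integral_bound} (via \eqref{region_symmetry}) on the $B_{12}^c$ term. Your additional remarks---that the factor $\frac{1}{w}\geq 1$ on the tail term is a harmless loosening, and that the inclusion $B'_1 \subset B_{12}$ from the linearity of \eqref{system} is the delicate step inside Lemma~\ref{second_integral_bound}---are accurate and, if anything, more careful than the paper's one-line argument.
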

\begin{proof}
The proof follows by \eq{etas-new} together with bounding \eqref{integral_seperation_first} using \eqref{domain_exp} and \eqref{integral_seperation_second} using the result of Lemma  \ref{second_integral_bound}, respectively. 
\end{proof}

Theorem \ref{ds_security_thm} implies that picking, for instance, $\alpha = 10 $ with $t=10$, and already having a very small $\frac{r}{\sigma_n}$ is sufficient to obtain almost the same bound on the DS metric as in the case where the noise terms are not truncated. Hence, truncation of the noise terms in \eqref{system} does not compromise the privacy of data in the protocol as long as $\alpha$ is picked sufficiently large. 

In order to obtain a similar result for the MIS metric, a result on the capacity of channels with additive truncated Gaussian noise is needed. This problem is studied recently, see, e.g., \cite{egan2017capacity}. In particular, it is shown that the capacity of AWGN channel is \emph{robust} against truncation of the noise. More specifically, it is shown that the change in the capacity by truncating the noise is $O(\exp (-\frac{\alpha^2}{2}))$ \cite{egan2017capacity}. Hence, the MIS metric is increased by at most $O(\exp (-\frac{\alpha^2}{2}))$ when truncating the noise, mimicking the result derived for the DS metric in \Tref{ds_security_thm}


In Table \ref{table::trade-off}, the  trade-off between the privacy and the accuracy of our protocol is demonstrated using the theoretical results obtained in Section \ref{sec::accuracy} and Section \ref{sec:privacy}. It can be observed that increasing the variance of the noise $\sigma_n$ improves the privacy but at the same time reduced the accuracy of the computations.

\begin{table}[]
    \centering
\begin{tabular}{ |c|c|c|c|c| } 
\hline
$\log_{10}(\sigma_n)$ & $5$ & $11$& $18$ \\
\hline
 $\log_{10}(\Delta f(s))$ & $-9.80$ & $-4.80$ & $0.196$\\ 
 \hline
$\log_{10}(\eta_s)$& $-2.36$ & $-7.35$&$-12.4$ \\ 
\hline
\end{tabular}
    \caption{\small Demonstration of the trade-off between DS security metric and accuracy. The upper bound on $\eta_s$ in \eqref{leakage_to_adversaries_final} is calculted versus the upper bound on  $\Delta f(s)$ obtained in \Tref{thm-acc} by at $\sigma_n = 10^5, 10^{10}, 10^{18}$. Other parameters are $a_D=1$, $t=1$, $D=1$, $\alpha=10$, $r=255$ and $v=52$.}
    \label{table::trade-off}
\end{table}

\section{Experiments}\label{sec:experiments}

In this section, we demonstrate experiment results on the performance of our proposed protocol when applied to a certain learning algorithm. First, it is shown that the accuracy of the results obtained by using our protocol in a distributed setting closely follows that of a conventional centralized approach, thereby providing almost the same accuracy as in the centralized approach. Second, the performance of our protocol is compared with that of the state-of-the-art schemes employing fixed-point numbers by quantizing the data and mapping it to finite field elements. In particular, we compare our protocol with CodedPrivateML \cite{so2019codedprivateml} in terms of accuracy and run time.

The problem of training a logistic regression (LR) model over MNIST dataset is considered. Let $\bX \in \R^{m \times d}$ denote a dataset consisting of $m$ samples with $d $ features and $\bl\in \{0,1\}^m$ denote the corresponding label vector. The task is to compute the model parameters (weights) $\bw\in \R^d$ by iteratively minimizing the cross entropy function using the following parameter update equation:
\be \label{updates}
\bw^{(j+1)}=\bw^{(j)}-\frac{\beta}{m}\bX^T (g(\bX\bw^{(j)})-\bl)),
\ee
where $\bw$ is the estimated parameters in iteration $i$, $\beta$ is the learning rate, and $g(x)\deff \frac{1}{1+\exp(-x)}$ is the sigmoid function that operates element-wise over the vector inputs. For each data point $\bx_i\in\R^{1\times d}$, the estimated probability of $l_i$ being equal to $1$ is $g(\bx_i\bw)$. All experiments are performed in MATLAB and the considered problem is the binary classification between digits $3$ and $7$ over MNIST dataset. Our protocol for the distributed training of the LR model is inspired by  Algorithm\,\ref{scheme} and is described in Algorithm\,\ref{scheme2}. This protocol is implemented using the default double-precision floating-point (FLP) representation in MATLAB with $64$ bits, where $v = 52$, $q=11$, and the other bit is reserved for the sign. 

\begin{algorithm}[H]
\SetAlgoLined
 \textbf{Input:} Dataset $\bX \in \R^{m\times d}$, the number of iterations $k$ and $\alpha$.\\
 \textbf{Public parameters:} $(\omega_1, \hdots, \omega_N)$, $\tilde{\mathbf{b}}_{1 \times N}=(\tilde{b}_1, \hdots, \tilde{b}_N)$.   \\
\textbf{Output:} Parameter vector $\bw$ for the logistic regression model.

\textbf{Encoding dataset (at the master):}

Pick i.i.d.  $\bN_j \in \R^{m\times d} $ with entries independently drawn from $ \cN(0,\frac{\sigma_n^2}{t})$ truncated at $\alpha \frac{\sigma_n}{\sqrt{t}}$, for $j=1,\dots,t$.


\For {$i \in [N]$}
{Compute $\tilde{\bX}_i=\bX+\sum_{j=1}^{t} \omega_i^j\bN_j$.}
Send $\tilde{\bX}_i$ to server $i$.

\textbf{Computation of $\bw$ iteratively:}  

Set $\bw^{(0)}=\boldsymbol{0}.$

\For {$j \in \{0, \hdots, k-1\}$}
{
\textbf{Encoding phase (at the master):}

Pick i.i.d.  $\mathbf{n}_j \in \R^{1\times d} $ with entries independently drawn from $ \cN(0,\frac{\sigma_n^2}{t})$ truncated at $\alpha \frac{\sigma_n}{\sqrt{t}}$, for $j=1,\dots,t$.

 Compute $\tilde{\bw}_i^{(j)}=\bw^{(j)}+\sum_{h=1}^{t} \omega_i^h\mathbf{n}_h$.
  
  Send $\tilde{\bw}_i^{(j)}$ to server $i$.
  
  \textbf{Computation phase (at server $i$):}
  
  Compute $\bz_i= \tilde{\bX}_i^T\tilde{\bX}_i\tilde{\bw}_i^{(j)}$.
  
  Send $\bz_i$ to the master node.
  
  \textbf{Decoding phase (at the master):}
  
  Compute $\bu^{(j)}=\sum_{i=1}^{N} \tilde{b}_i{\bz}_i$.
  
  Update $\bw^{(j+1)}=\bw^{(j)}-\frac{\beta}{2m} [\frac{1}{2}\bu^{(j)}+\bX^T(\mathbb{1}-2\bl)]$. 
  }
Return $\bw=\bw^{(k)}$.

\caption{Privacy-preserving distributed training of logistic regression model in the analog domain. }
\label{scheme2}
\end{algorithm}

Next, we describe the steps in Algorithm\,\ref{scheme2} in details. In the beginning, the data matrix $\bX$ is encoded element-wise using the analog counterpart of Shamir's encoder, same as in \eqref{system}, and then the secret shares are sent to the servers. Let $\tilde{\bX}_i$ denote the share sent to server $i$, for $i \in [N]$. The initial parameter vector is set to the all-zero vector, i.e., $\bw^{(0)}=\mathbf{0}$. Let $k$ denote the total number of iterations for updating the model parameters using \eqref{updates} in the experiment. In the $j$-th iteration, for $j \in \{0, \hdots, k-1\}$, the master node encodes $\bw^{(j)}$ element-wise, again same as in \eqref{system}, and sends the shares to the servers. Let  $\tilde{\bw}_i^{(j)}$ denote the share of  $\bw^{(j)}$ sent to server $i$. The server $i$ then computes $ \tilde{\bX}_i^T\tilde{\bX}_i\tilde{\bw}_i^{(j)}$ and returns the result to the master node. Next, the master node recovers $\bX_i^T\bX_i\bw_i^{(j)}$ by computing a linear combination of the returned results, same as in the decoding phase in Algorithm\,\ref{scheme}, and utilizes it to update the vector of parameters according to \eqref{updates} with the sigmoid function substituted by its $1$-degree polynomial approximation, i.e., $g(x)\approx \frac{1}{2}+\frac{x}{4}$. This procedure is continued till the desired number of iterations is passed and the last update of the parameter vector is returned as the final result of the protocol. It is worth mentioning that the data matrix $\bX$ is secret-shared only once at the beginning and the same shares are used at each iteration by the servers while the parameter vector $\bw$ is updated and secret-shared in each iteration.

The vector of model parameters $\bw$ for the training dataset is computed using Algorithm\,\ref{scheme2} as well as using the conventional centralized method. The number of servers $N=4$ and $t=1$ are assumed. Note that in the centralized method the sigmoid function is not approximated while in our implementation it is approximated with a degree-1 polynomial. Then the accuracy of the predictions are determined over the MNIST test dataset in both approaches. The result is shown in Figure \ref{accuracy}. It can be observed that the accuracy of our protocol closely follows that of the conventional centralized approach.

\begin{figure}[t]
	\begin{center}
		\includegraphics[width=0.9\linewidth]{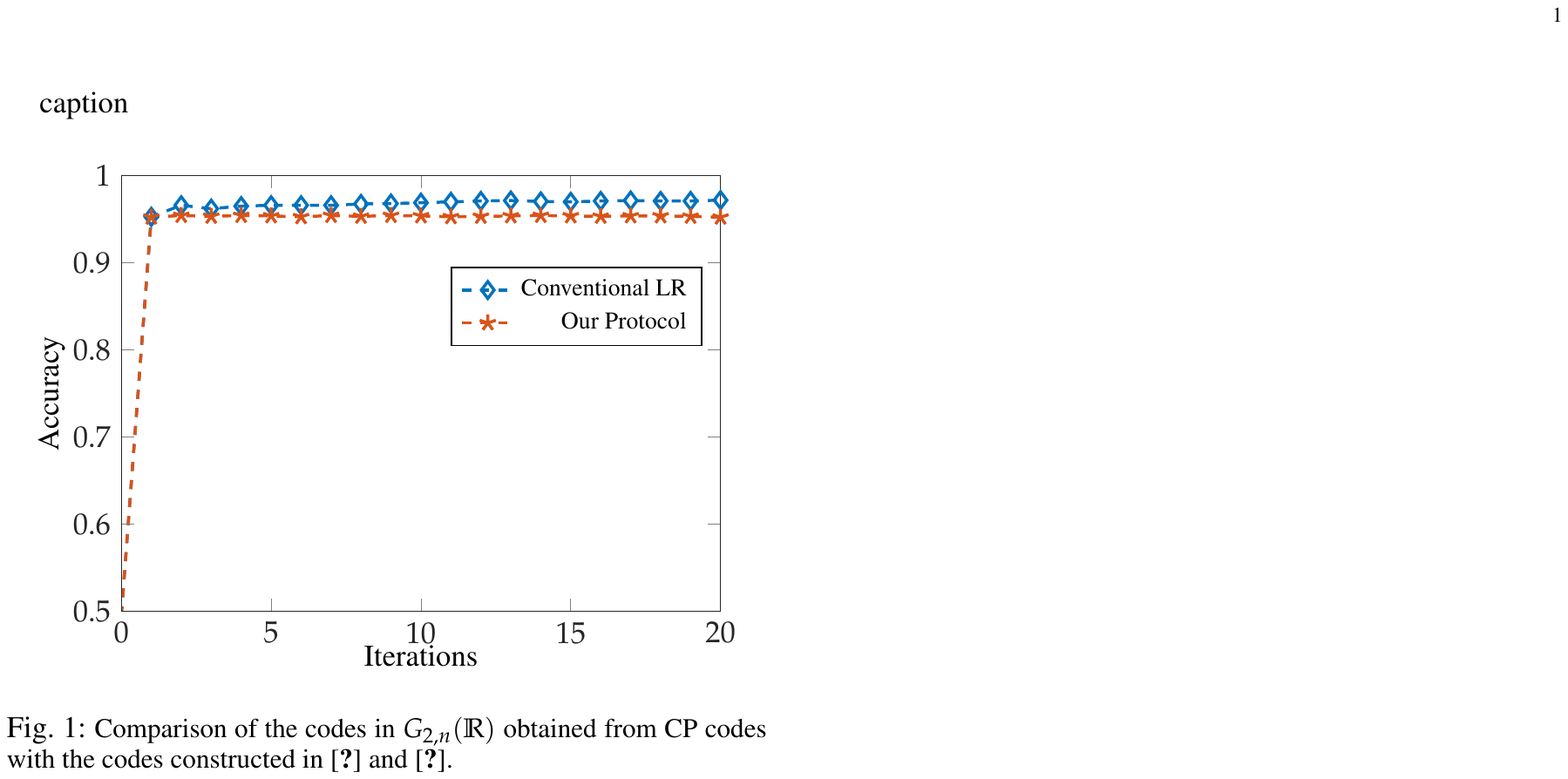}
		\caption{\small Comparison between the accuracy of our distributed learning protocol and the conventional centralized logistic regression (LR). }\label{accuracy}
	\end{center}
\end{figure}

It this setting with honest-but-curious servers, as mentioned in Section\,\ref{sec:one-A}, the servers may attempt to infer the data by accumulating all received shares during all iterations. Since $\mathbf{n}_h$'s in Algorithm\,\ref{scheme2} are picked independently in each iteration, the leakage of information for the model in terms of DS metric is bounded by $k \eta_s$, where $k$ is the number of iterations and $\eta_s$ is characterized in \eqref{approx_semantic_security} for $t=1$ and in \eqref{eta_s_tbound} for $t>1$. Furthermore, the privacy guarantee for the dataset in terms of the DS metric is given by \eqref{approx_semantic_security} for $t=1$  and by \eqref{eta_s_tbound} for $t>1$, regardless of the value of $k$ since the dataset is encoded and sent to the servers only once during the protocol. Hence, given all the parameters in the described experiment, the privacy guarantee in our protocol in terms of the DS metric is $\eta_s \leq 2\times10^{-14}$ for the model and $\eta_s \leq 10^{-15}$ for the dataset. These hold by utilizing \eqref{approx_semantic_security}, where $\sigma_n=10^{18}$ is picked, and setting $l\leq 20$ in all experiments while noting that the maximum absolute value of data is $r=255$ in MNIST dataset. 

In the second experiment, the accuracy of a fixed-point (FXP) implementation, according to the protocol proposed in CodedPrivateML \cite{so2019codedprivateml}, is simulated in a similar scenario with $N=4$ and $t=1$, and is compared with that of our protocol. All other parameter are picked according to what is reported in \cite{so2019codedprivateml}, which also uses $64$ bits to represent elements of the finite field. Figure \ref{floatVSfixed} demonstrates that the accuracy of CodedPrivateML (fixed point) is significantly dropped to around $0.5$, equivalent to that of a random guessing, when the size of dataset exceeds $100$. Note that the original train and test datasets consist of $12396$ and $2038$ samples, respectively. In order to observe the performance with small dataset sizes, we pick a dataset with equal data points labeled with $3$ and $7$ in each experiment. Also, we run the experiment $1000$ times by picking different sets of samples and the average accuracy is reported in Figure \ref{floatVSfixed}  .

\begin{figure}[t]
	\begin{center}
		\includegraphics[width=0.9\linewidth]{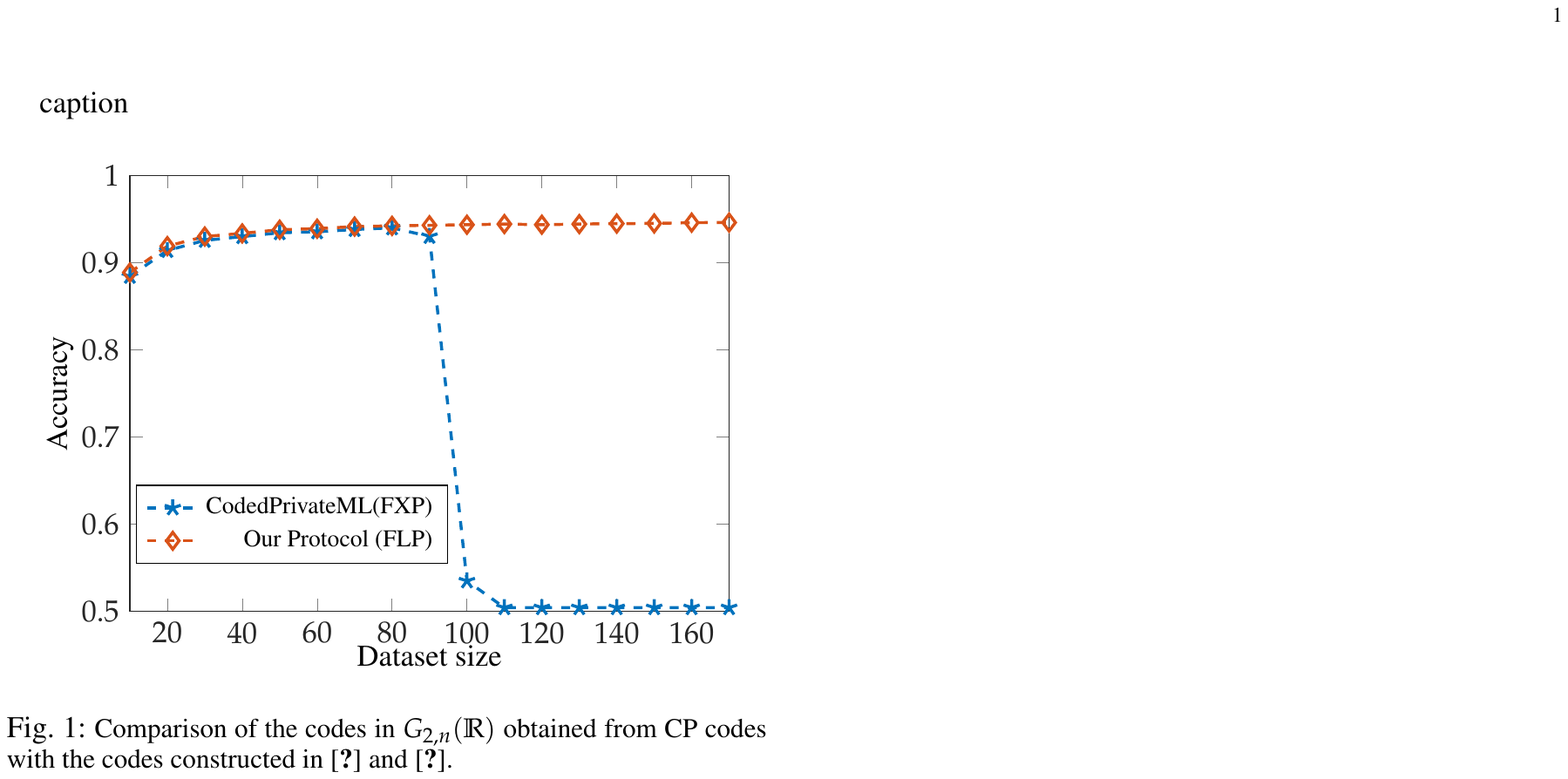}
		\caption{\small Comparison between the accuracy of our protocol and CodedPrivateML \cite{so2019codedprivateml} implementations. The number of iterations in both cases is $15$.
		}\label{floatVSfixed}
	\end{center}
\end{figure}


This comparison demonstrates the superiority of our proposed protocol in the analog domain and implemented using floating point numbers comparing to the state-of-the-art distributed computing and learning schemes employing quantization followed by computations over a finite field. In other words, our protocol is \emph{robust} with respect to the size of the training dataset while the fixed-point implementations suffer significantly from wrap-around error as the size of dataset passes a certain threshold depending on the prime number picked as the size of underlying finite field.

\begin{table}[]
    \centering
\begin{tabular}{ |c|c|c|c| } 
\hline
Dataset size & CodedPrivateML & Our Protocol \\
\hline
$1000$ & $0.72$ & $0.25$ \\ 
$2000$& $1.49$ & $0.52$ \\ 
$3000$& $2.49$ & $0.80$ \\ 
$4000$& $3.87$ &$1.09$\\
$5000$& $5.94$ &$1.38$\\
\hline

\end{tabular}
    \caption{\small Comparison of the run times between the fixed-point and the floating-point implementations. The times are reported in seconds. The experiments are done on a Macbook pro with 3.5 GHz dual-core intel core i7 CPU and 16 GB memory.}
    \label{table}
\end{table}

One major advantage of CodedPrivateML over MPC-based approaches is that it provides an order of magnitude speed up, based on the experiment results reported in \cite{so2019codedprivateml}. The reason is that in CodedPrivateML, there is no communication between computation parties thereby improving the communication complexity of the scheme significantly, compared with the state-of-the-art cryptographic approaches. This advantage is preserved in our protocol as well, since no communication is needed between the parties.  

Note that in order to avoid the wrap-around error in the fixed-point implementation each computation party should stop the computation before the the wrap-around threshold is passed and divide the computation task into smaller subtasks.  Then, it needs to send back all the computation results associated to each subtask to the master node in order to guarantee recovery of the computation result. This results in an excess communication and computation overhead compared with our protocol. Moreover, since the threshold is not known \emph{a priori}, one always needs to check if the wrap-around is occurred during the computation process. These factors slow down CodedPrivateML when the dataset is large and one wants to avoid the errors due to wrap-around. In Table \ref{table}, the computation times of CodedPrivateML and our protocol are compared for the experiment discussed in this section for different dataset sizes, while discarding the delay in CodedPrivateML due to frequently checking wrap-around errors and communication overhead.  
It shows that for the same level of accuracy of the results, our approach with the floating-point implementation also outperforms the fixed-point implementations while preserving the speed up advantage compared with the MPC-based schemes.

\section{Conclusion and Discussions}\label{sec:conclusion}
In this paper, we tackled the critical problem of privacy-preserving computation over a real-valued dataset using distributed honest-but-curious servers. To this end, we proposed a protocol that utilizes a counterpart of Shamir's secret sharing scheme in the analog domain. In order to measure the privacy level of the data, the conventional notion of distinguishing security is extended to the analog domain and privacy guaranties for the proposed scheme are characterized based on this security metric. The well-known connection between the DS and the MIS measures of security is extended from the discrete domain to the continues domain. This is then utilized to bound the DS metric of our protocol using well-known results on the capacity of SIMO channel with correlated noise. Furthermore, the accuracy of the outcome of the computation is characterized assuming a floating-point implementation of the protocol. In our experiments, we illustrated that the accuracy of the predictions for the logistic regression model over the MNIST dataset derived by our protocol closely follows that of the conventional centralized approach. Finally, we showed that our protocol is robust with respect to the size of the training dataset, i.e., there is almost no accuracy loss as the size of the training dataset grows large, while the performance of the fixed-point implementations in prior work significantly diminishes due to overflow errors.

There are several directions for future work. Extending the proposed protocol in this paper to scenarios with straggler servers is an interesting direction for future research. More specifically, in our protocol it is assumed that all the servers successfully finish their assigned tasks, while a certain number of servers, referred to as stragglers, may be slow or may not respond at all in practice \cite{lee2018speeding,li2016unified,yu2020straggler,reisizadeh2019coded,aliasgari2018coded, jamali2019coded}. The main challenge in this direction is to pick the parameters of the protocol and to design the decoder that is better than the naive and numerically unstable approach of solving a system of linear equations in the analog domain. Another direction is to adopt the proposed protocol in this paper to perform computational tasks in distributed fashion for other applications, such as distributed optimization and mechanism design \cite{heydariben2018distributed,rabbat2004distributed,heydaribeni2019distributed,zhang2018improving,heydaribeni2018distributed}, while keeping the data private. Generalizing Algorithm\,\ref{scheme} in order to simultaneously compute multiple evaluations of a polynomial in a single-shot is another future direction. To this end, techniques for multi-user secret sharing can be utilized \cite{soleymani2018distributed}. Obtaining such results can potentially lead to privacy-preserving multi-task learning protocols, i.e., protocols that train multiple models over a dataset in a single round.

\bibliographystyle{IEEEtran}
\bibliography{ref}

\end{document}